\title[Distinguishing Cause from Effect on Categorical Data]{Distinguishing Cause from Effect on Categorical Data: \\
The Uniform Channel Model}
\begin{document}

\maketitle

\begin{abstract}%
Distinguishing cause from effect using observations of a pair of random variables is a core problem in causal discovery. Most approaches proposed for this task, namely \textit{additive noise models} (ANM), are only adequate for quantitative data. We propose a criterion to address the cause-effect problem with categorical variables (living in sets with no meaningful order), inspired by seeing a conditional \textit{probability mass function} (pmf) as a discrete memoryless channel. We select as the most likely causal direction the one which the conditional pmf is closer to a \textit{uniform channel} (UC). The rationale is that, in a UC, as in an ANM, the conditional entropy (of the effect given the cause) is independent of the cause distribution, in agreement with the principle of \textit{independence of cause and mechanism}. Our approach, which we call the \textit{uniform channel model} (UCM), thus extends the ANM rationale to categorical variables. To assess how \textit{close} a conditional pmf (estimated from data) is to a UC, we use statistical testing, supported by a closed-form estimate of a UC channel. On the theoretical front, we prove identifiability of the UCM and show its equivalence with a structural causal model with a low-cardinality exogenous variable. Finally, the proposed method compares favorably with recent state-of-the-art alternatives in experiments on synthetic, benchmark, and real data.

\end{abstract}


\section{Introduction}
\label{sec:intro}
Causal inference is a key problem in many areas of science and data analysis \citep{pearlcausality}. In principle, distinguishing statistical dependencies from causal relationships requires interventions \citep{pearlcausality,elementsofcausalinference}. However, intervening is often impossible (\textit{e.g.}, analyzing past data), impractical, or unethical (\textit{e.g.}, forcing people to smoke), which has stimulated much research aimed at inferring causal relationships (\textit{causal discovery}) from purely observational data \citep{Janzing2019, Mooij2016, elementsofcausalinference}, or mixed observational-interventional data \citep{pmlr-v177-faria22a}

There is a vast literature  on methods to learn \textit{directed acyclic graphs} (DAGs) from  data, usually by inferring \textit{conditional independence} (CI) properties among variables \citep{Chickering2002b,Heckerman,Koller_Friedman}. However, without additional assumptions or criteria, those methods cannot distinguish different DAGs entailing the same CIs (in the same \textit{Markov equivalence class} -- MEC). The simplest instance of this problem involves a pair of variables $(X,Y)$: purely statistical methods (\textit{e.g.}, maximum likelihood estimation) cannot recover the causal graph because $X \rightarrow Y$ and $Y \rightarrow X$ constitute a MEC, corresponding to the two possible factorizations of the joint distribution: $p_{X,Y}(x,y) = p_{Y|X}(y|x) \, p_X(x) = p_{X|Y}(x|y)\, p_Y(y)$. Although there are several methods that select particular elements of a MEC, they all include additional assumptions beyond \textit{faithfulness}\footnote{Fiathfulness holds if every conditional independence in the joint probability distribution corresponds to a separation property in the graph \citep{Koller_Friedman,Sadeghi}.} and, for one reason or another they cannot be used to distinguish between $X \rightarrow Y$ and $Y \rightarrow X$, where $X$ and $Y$ are a pair of categorical variables; \textit{e.g.}, they are only applicable to quantitative data \citep{Park2018} or only make sense for more than two variables \citep{Gao2021}.

Without interventions, choosing an element of a MEC requires additional assumptions about the underlying data-generating mechanism. For instance, \textit{additive noise models} (ANM) \citep{anm1,anm3,anm2011,anm4} assume the effect is a function of the cause plus a noise term independent of the cause ($Y = f_Y(X) + N_Y, X \perp \!\!\! \perp N_Y$); if the same doesn't hold in the reverse direction, the model is said to be \textit{identifiable}. ANMs are generically identifiable \citep{anm2011,anm4} in the following sense: if the joint density $p_{X,Y}$ corresponds to an ANM, the conditional density $p_{Y|X}(\cdot|x)$ has identical \textit{shape} for any $x$, simply being \textit{shifted} by $f_Y(x)$, but $p_{X|Y}(\cdot|y)$ typically depends on $y$ in a more complicated way. Under the ANM criterion, if such a  model exists in one direction but not the other, the former is selected as the causal direction.

The ANM can be seen as an instance of the principle of \textit{independence of cause and mechanism} (ICM) \citep{Janzing2010,kolm3}, according to which the cause-effect mechanism (a deterministic function followed by addition of noise, in an ANM) is \textit{independent}\footnote{The term ``independent'' here does not have a probabilistic sense, but a functional sense: changing the distribution of the cause does not affect the causal mechanim, and vice-versa.} of the cause, thus of its distribution. The ICM principle has been exploited using  different tools to define and assess the notion of \textit{independence}: information geometry \citep{Daniusis, kolm3}; algorithmic information theory, namely Kolmogorov complexity (which is not computable, but is approximable \citep{Vitanyi}), by \citet{Janzing_TIT_2010} and \citet{Mian_Marx_Vreeken_2021}; stochastic complexity, via the \textit{minimum description length} (MDL \citep{mdl}) principle \citep{Budhathoki2017,Marx2019,Tagasovska} or the \textit{minimum message length} \citep{Wallace} criterion, by \citet{Stegle}.

Relatively few methods have been proposed to address the cause-effect problem with categorical variables. \citet{anm2011,Peters_AISTATS_2010}  extended ANMs to the discrete case and proved identifiability, but only for variables taking values in a set equipped with a meaningful order, in which an operation similar to addition (a shift) is defined. They consider the rings $\mathbb{Z}$, for variables without cyclic structure, and $\mathbb{Z}/n\mathbb{Z}$ with modulo-$n$ addition, for cyclic variables (\textit{e.g.},  seasons or months of the year), or subsets of these rings. However, purely categorical variables live in sets with no order, thus no meaningful notion of addition or shift, precluding the direct use of ANMs. \citet{anm2011} consider what they call ``structureless'' sets, but only with a particular form of the conditional pmf, not generally applicable. Some of the MDL-based methods mentioned above can be used with categorical variables: \citet{Budhathoki2017} proposed CISC (\textit{causal inference by stochastic complexity}); \cite{hcr} proposed HCR (\textit{hidden compact representation}), based on BIC (\textit{Bayesian information criterion}). \citet{dc} assess mechanism independence via a \textit{distance correlation} (DC) between the cause pmf and the conditional pmf of the effect. \citet{kocaoglu_entropic_2017}  select the causal direction in which the sum of the marginal entropy of the cause with that of the exogenous variable in the corresponding \textit{structural causal model} (SCM) is minimal. Recently, \citet{ni2022} addressed the cause-effect problem for categorical variables by formulating the conditional distribution of the effect given the cause as ordinal regression, with optimal label permutation, and choosing the direction in which this model has the highest likelihood. 

As is standard when focusing on the cause-effect problem, we assume \textit{causal sufficiency} (\textit{i.e.}, absence of unobserved \textit{confounders}), no selection bias, and no feedback \citep{Mooij2016}. We propose a new approach to the cause-effect problem for categorical variables, inspired by viewing the causal mechanism as a communication channel. This view allows extending to the categorical case a key feature of ANMs: the conditional (differential) entropy of the effect given the cause is independent\footnotemark[1] of the distribution of the cause \citep{kolm3}. For categorical variables (\textit{i.e.}, symbols, in channel terminology), a memoryless channel corresponds to the conditional \textit{probability mass function} (pmf) of the output given the input, the \textit{channel matrix} $\boldsymbol \theta^{X\rightarrow Y}$, where  $\boldsymbol \theta_{x,y}^{X\rightarrow Y} = p_{Y|X}(y|x) = \mathbb{P}[Y=y | X=x]$. In a so-called \textit{uniform channel} (UC \citep{Hamming}), the rows of this matrix are permutations of each other\footnote{A uniform channel is not necessarily a \textit{symmetric channel}, which requires additionally that all the columns are also permutations of each other \citep{code}.}, implying (as shown below) that the conditional entropy $H(Y|X)$ is independent of the distribution of $X$. Paralleling the ANM rationale, given a pair of categorical variables ($X,Y$), if the conditional pmf in one direction, say of $Y$ given $X$, corresponds to a UC and the same is not true in the other direction, then the causal structure is declared to be $X\rightarrow Y$. This criterion, which we refer to as the UCM (\textit{uniform channel model}) is supported by an \textit{identifiability} result proved in this paper: if a joint distribution corresponds to a UCM in one direction, in general (\textit{i.e.}, with probability one under any continuous density on the model parameters), it does not correspond to a UCM in the reverse direction. 

The proposed UCM approach is further supported by the fact (proved below) that if, and only if, $\boldsymbol \theta^{X\rightarrow Y}$ corresponds to a UCM, is it possible to write a \textit{structural causal model} (SCM) \citep{pearlcausality} of the form $Y = f_Y(X,U_Y)$, where $f_Y$ is a deterministic function and $U_Y$ is an exogenous random variable, taking values in in the same set as $Y$ and independent of $X$. The importance of this independence was recently highlighted by \citet{Papineau}: ``the probabilistic independence of exogenous terms in (...) structural equations holds the key to causal direction''. 

A final question is how to instantiate the UCM principle with a finite amount of data. This question parallels that of how to estimate the underlying function and noise distribution in an ANM. Naturally, with a finite dataset, we only have an estimate of the underlying distribution and the probability that this estimate corresponds exactly to a UCM in one of the two directions is vanishingly small. Although other ways to address this issue are conceivable, we resort to statistical hypothesis testing to decide in which direction, if any, the conditional pmf can be considered a UCM. A key building block of this approach is estimating a channel under the constraint that it is uniform; this is a problem that, to the best of our knowledge, had not be studied before and for which we derive a closed-form solution. We also extend the approach to the case where the rows of the channel matrix are cyclic permutations of each other (a \textit{cyclic UCM} -- CUCM), applicable when the effect variable has cyclic nature, but in this case the channel estimate has to be obtained iteratively.

It is important to stress that using the UCM (or ANM, or any other restricted model class, for that matter) to identify a causal relation does not imply any assumption that this is a realistic model of the true underlying relation. As clearly argued by \citet[Section 4.1.2]{elementsofcausalinference}, the rationale is simply that if there is such a model in one direction, but not the other, it is more likely that the former is the causal direction. 

The main contributions of this paper are the following:

\begin{itemize}
    \item A new instantiation, for categorical variables, of the principle of \textit{independence of cause and mechanism}: the \textit{uniform channel model} (UCM) principle.
    \item A proof of identifiability of the UCM. 
    \item A proof that the joint distribution of a pair of categorical random variables is entailed by an SCM in which the exogenous noise has the same cardinality as the effect variable if and only if it corresponds to a UCM.
    \item An instantiation of the UCM principle using statistical hypothesis testing, supported on a closed-form estimate of a UCM (which, to the best of our knowledge, is a new result, possibly of independent interest).
\end{itemize}

The paper is organized as follows. Section \ref{sec:UCModel} describes the UCM and presents the corresponding identifiability theorem and equivalence to an SCM. Section \ref{sec:channel} addresses the problem of estimating uniform and cyclic uniform channels from  data. Section \ref{sec:criterion_data} 
describes how the criterion is applied to observed data. Experimental results are reported in Section \ref{sec:resul}, and Section \ref{sec:concl} concludes the paper.

\section{Uniform Channel Models -- UCM -- for Categorical Variables}
\label{sec:UCModel}
This section describes the proposed causal inference principle for categorical variables, after introducing notation and reviewing the notion of uniform channel. Finally, we prove an identifiability theorem for the proposed model and show its equivalence with an SCM. 

\label{sec:method}
\subsection{Categorical Variables and Uniform Channels}
Let $X \in \mathcal{X} = \{1,...,|\mathcal{X}|\}$ and $Y \in \mathcal{Y} = \{1,...,|\mathcal{Y}|\}$ be two \textit{categorical random variables} (although their outcomes are shown as integers, no role is played by their order). The joint pmf $p_{X,Y}$ can be factored in two different ways, $
 p_{X,Y}(x,y) = \; p_{Y|X}(y|x) \; p_X(x)  =  p_{X|Y}(x|y)\; p_Y(y)$,  corresponding to a \textit{Markov equivalence class}. If $X  \perp \!\!\! \perp Y$, the joint pmf factors trivially $p_{X,Y}(x,y) = \; p_X(x)\, p_{Y}(y)$. Let the vector of parameters of the first factorization be denoted as $\boldsymbol \theta = (\boldsymbol \theta^X, \boldsymbol \theta^{X\rightarrow Y})$, \textit{i.e.},
\begin{align*}
    \theta_x^X = p_X(x) = \mathbb{P}[X=x] \quad \text{and} \quad  \theta_{x,y}^{X\rightarrow Y} = p_{Y|X}(y|x) = \mathbb{P}[Y=y | X = x],
\end{align*}
where $\boldsymbol \theta^X \in \Delta_{|\mathcal{X}| - 1}$, with $\Delta_{m-1}$ being the probability simplex in $\mathbb{R}^m$. The conditional probabilities are arranged in a  $|\mathcal{X}| \times |\mathcal{Y}|$ row-stochastic matrix $\boldsymbol \theta^{X\rightarrow Y}$, with the  $x$-th row denoted as $\boldsymbol \theta_x^{X\rightarrow Y}$. 

\vspace{0.05cm}

\begin{definition}[Discrete Memoryless Channel] \citep{code,Hamming}
A discrete memoryless channel (DMC) is a probabilistic system with a discrete input alphabet $\mathcal{X} = \{1,\ldots, |\mathcal{X}|\}$ and a discrete output alphabet $\mathcal{Y}= \{1,\ldots, |\mathcal{Y}|\}$, specified by the conditional probabilities $p_{Y|X}( y| x)$, for $x \in \mathcal{X}$ and $y \in \mathcal{Y}$.  The adjective ``memoryless" means that, given a sequence of random inputs, the corresponding outputs are conditionally independent.

\end{definition}

\begin{definition}[Uniform channel (UC)] \citep{Hamming}
A UC is a DMC in which each row of the conditional probability (channel) matrix $\boldsymbol \theta^{X\rightarrow Y}$ is a permutation of every other row. 
\end{definition}

\begin{definition}[Cyclic uniform channel (CUC)] A CUC is a UC where each row of the channel matrix $\boldsymbol \theta^{X\rightarrow Y}$ is a cyclic permutation of every other row.
\end{definition}

Let $\mathbb{S}_{|\mathcal{Y}|}$ denote the set of all $|\mathcal{Y}|!$ permutations of $(1,..., |\mathcal{Y}|)$. In a UC, each row $\boldsymbol \theta_x^{X\rightarrow Y}$ is a row-specific permutation $\sigma_x \in \mathbb{S}_{|\mathcal{Y}|}$ of a common vector $\boldsymbol\gamma \in \Delta_{|\mathcal{Y}|-1}$, \textit{i.e.},
\begin{equation}
\boldsymbol \theta_x^{X\rightarrow Y} = (\gamma_{\sigma_x (1)}, \ldots,   \gamma_{\sigma_x (|\mathcal{Y}|)}) \;\;\; \Leftrightarrow \;\;\; p_{Y|X}(y|x) = \gamma_{\sigma_x(y)}. \label{eq_gama_perm}
\end{equation}
In the case of a CUC, $\sigma_x \in \mathbb{C}_{|\mathcal{Y}|}$, the set of all $|\mathcal{Y}|$ cyclic permutations of $(1,..., |\mathcal{Y}|)$.

\subsection{Uniform Channel Model -- UCM -- for Categorical Variables}
We propose a new principle to infer the most likely causal direction between two categorical variables by following the rationale behind ANMs. Recall that the ANM principle for real variables is as follows: if $Y$ satisfies an ANM $Y= f_Y(X) + N_Y$, where $N_Y \perp \!\!\! \perp X$ (\textit{i.e.}, the \textit{exogenous} noise is independent of $X$), but the same is not true in the reverse direction, then the most likely causal direction is $X\rightarrow Y$. In the ANM for real variables, the conditional \textit{probability density function} (pdf) has identical \textit{shape} for all values of $x$, simply being \textit{shifted} by $f_Y(x)$, \textit{i.e.}, $p_{Y|X}(y|x) = p_{N_Y} (y-f_Y(x))$, where $p_{N_Y}$ is the pdf of the noise variable $N_Y$. Consequently, the \textit{conditional differential entropy} $h(Y|X)$ does not depend on the pfd of $X$, as shown next. 

\begin{proposition}\label{conditioned_h}
If real-valued variables $X$ and $Y$ admit an ANM from $X$ to $Y$, then the \textit{conditional differential entropy} $h(Y|X) = h(N_Y)$, independently of the distribution of $X$.
\end{proposition}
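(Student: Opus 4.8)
The plan is to unfold the definition of conditional differential entropy as an average over $x$ of the per-slice differential entropies, and then exploit the fact that, under the ANM, each slice is merely a translate of $N_Y$. First I would write
\[
h(Y\mid X) \;=\; \int_{\mathcal X} p_X(x)\, h(Y\mid X=x)\, dx,
\]
where $h(Y\mid X=x)$ denotes the differential entropy of the conditional density $p_{Y\mid X}(\cdot\mid x)$. The ANM assumption $Y = f_Y(X) + N_Y$ with $N_Y \perp\!\!\!\perp X$ gives $p_{Y\mid X}(y\mid x) = p_{N_Y}\!\bigl(y - f_Y(x)\bigr)$, i.e., for each fixed $x$ the conditional density is the density of $N_Y$ shifted by the constant $f_Y(x)$.

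The key step is the translation invariance of differential entropy: for any random variable $N$ with density $p_N$ and any constant $c$, the variable $N + c$ has density $p_N(\cdot - c)$ and hence $h(N+c) = -\int p_N(t-c)\log p_N(t-c)\, dt = -\int p_N(u)\log p_N(u)\, du = h(N)$, by the substitution $u = t - c$. Applying this with $N = N_Y$ and $c = f_Y(x)$ yields $h(Y\mid X=x) = h(N_Y)$ for every $x$ in the support of $X$.

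Substituting back, $h(Y\mid X) = \int_{\mathcal X} p_X(x)\, h(N_Y)\, dx = h(N_Y)$, since $h(N_Y)$ does not depend on $x$ and $p_X$ integrates to one; in particular the value is the same for any distribution of $X$, which is the claim. The only point requiring care — and the closest thing to an obstacle — is the usual well-definedness caveat: one should assume $N_Y$ admits a density and that $h(N_Y)$ (hence $h(Y\mid X)$) is finite, so that the change of variables and the exchange of integration are legitimate; no deeper difficulty arises, as the argument is essentially the translation invariance of $h$ combined with the ANM's shift structure.
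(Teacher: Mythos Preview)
Your proof is correct and follows essentially the same route as the paper: decompose $h(Y\mid X)$ as an average over $x$ of the per-slice entropies $h(Y\mid X=x)$, use the ANM to identify each slice as a shift of $p_{N_Y}$, and invoke translation invariance of differential entropy to conclude that every slice equals $h(N_Y)$. The paper phrases the same computation in expectation notation, $h(Y\mid X) = \mathbb{E}_X\bigl[\mathbb{E}_{Y\mid X}[-\log p_{N_Y}(Y-f_Y(X))]\bigr] = \mathbb{E}_X[h(N_Y)] = h(N_Y)$, but the substance is identical.
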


{\noindent\bf Proof}: Using the shift-invariance  (a) of differential entropy \citep{code},
\begin{align}
h(Y|X) & =  \; \mathbb{E}_{X,Y} [ -\log p_{Y|X}(Y|X) ] \; = \; \mathbb{E}_{X} \bigl[  \mathbb{E}_{Y|X}[ -\log p_{Y|X}(Y|X) ] \bigr] \nonumber \\ 
& = \;  \mathbb{E}_{X} \bigl[  \mathbb{E}_{Y|X}[ -\log p_{N_Y}(Y-f_Y(X)) ] \bigr] \; \stackrel{(a)}{=}\; \mathbb{E}_{X} \bigl[ h(N_Y) \bigr]  \;  = \; h(N_Y).  \tag*{\mbox{$\blacksquare$}}
\end{align}

For categorical variables, the sets $\mathcal{X}$ and $\mathcal{Y}$ lack any meaningful order, thus there is no notion of \textit{addition}, and an ANM is not directly applicable. However, the conditional entropy invariance property of ANMs can be preserved by considering the transformation group under which discrete entropy is invariant: \textit{permutations}. Consequently, our proposed causal inference principle is:

\begin{center}
\fbox{
\begin{minipage}{13.5cm}
\begin{description}[leftmargin=*]
\item[UCM causal inference principle for categorical variables:] given two categorical variables $X$ and $Y$, if the conditional pmf $\boldsymbol \theta^{X \rightarrow Y}$ corresponds to a UCM, but the conditional pmf $\boldsymbol \theta^{Y\rightarrow X}$ does not, then we infer the causal direction to be $X\rightarrow Y$.
\end{description}
\end{minipage}}
\end{center}

Paralleling Proposition \ref{conditioned_h}, the following result is a simple consequence of the invariance of (discrete) entropy to symbol permutations. 

\begin{proposition}\label{conditioned_H}
If $\boldsymbol \theta^{X\rightarrow Y}$ corresponds to a UC (each row of $\boldsymbol \theta^{X\rightarrow Y}$ is a permutation of a vector $\boldsymbol\gamma \in \Delta_{|\mathcal{Y}|-1}$), then the conditional entropy $H(Y|X) = H({\boldsymbol \gamma})$, independently of $p_X$.
\end{proposition}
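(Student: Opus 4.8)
The plan is to follow exactly the template of the proof of Proposition~\ref{conditioned_h}, substituting the permutation-invariance of discrete entropy for the shift-invariance of differential entropy. First I would decompose the conditional entropy by conditioning on the value of $X$,
\begin{align*}
H(Y\mid X) \;=\; \mathbb{E}_{X}\bigl[\, H(Y\mid X=x)\,\bigr] \;=\; \sum_{x\in\mathcal{X}} \theta^{X}_{x}\, H\bigl(\boldsymbol\theta^{X\rightarrow Y}_{x}\bigr),
\end{align*}
where $H(\mathbf{p}) = -\sum_{y} p_{y}\log p_{y}$ denotes the (discrete) entropy of a pmf $\mathbf{p}$ on $\mathcal{Y}$; this is just the standard chain-rule identity $H(Y\mid X)=\sum_{x} p_{X}(x)\,H(Y\mid X=x)$.

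The second step, which is the only one carrying any content, invokes the UC assumption in the form of \eqref{eq_gama_perm}: the $x$-th row is $\boldsymbol\theta^{X\rightarrow Y}_{x} = (\gamma_{\sigma_{x}(1)},\ldots,\gamma_{\sigma_{x}(|\mathcal{Y}|)})$ for some $\sigma_{x}\in\mathbb{S}_{|\mathcal{Y}|}$. Because $\sigma_{x}$ is a bijection of $\mathcal{Y}$, reindexing the sum defining the entropy by $z=\sigma_{x}(y)$ gives
\begin{align*}
H\bigl(\boldsymbol\theta^{X\rightarrow Y}_{x}\bigr) \;=\; -\sum_{y\in\mathcal{Y}} \gamma_{\sigma_{x}(y)}\log\gamma_{\sigma_{x}(y)} \;=\; -\sum_{z\in\mathcal{Y}} \gamma_{z}\log\gamma_{z} \;=\; H(\boldsymbol\gamma),
\end{align*}
for every $x\in\mathcal{X}$. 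Substituting back and using $\sum_{x}\theta^{X}_{x}=1$ yields $H(Y\mid X) = \bigl(\sum_{x}\theta^{X}_{x}\bigr)\, H(\boldsymbol\gamma) = H(\boldsymbol\gamma)$, with no dependence on $p_{X}$.

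I do not expect any real obstacle: the argument reduces to the commutativity of a finite sum, exactly as Proposition~\ref{conditioned_h} reduces to the shift-invariance of $h$. The only points worth stating explicitly are that this makes the two propositions genuinely parallel (permutations being the transformation group leaving discrete entropy invariant, in place of shifts for differential entropy), and that the CUC case requires nothing extra, since $\mathbb{C}_{|\mathcal{Y}|}\subset\mathbb{S}_{|\mathcal{Y}|}$ and the computation above used only that each $\sigma_{x}$ is a permutation.
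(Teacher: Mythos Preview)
Your proof is correct and essentially identical to the paper's: both decompose $H(Y\mid X)$ as an expectation over $X$ of $H(Y\mid X=x)$ and then invoke the permutation-invariance of discrete entropy to conclude that each conditional term equals $H(\boldsymbol\gamma)$. The only cosmetic difference is that the paper writes the decomposition as iterated expectations $\mathbb{E}_{X}\bigl[\mathbb{E}_{Y\mid X}[-\log\gamma_{\sigma_X(Y)}]\bigr]$ and cites permutation-invariance as a named property, whereas you spell out the reindexing $z=\sigma_x(y)$ explicitly.
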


{\noindent\bf Proof}: Due to the permutation-invariance property (a) of entropy \citep{code},
\begin{align}
H(Y|X) & = \mathbb{E}_{X,Y} [ -\log p_{Y|X}(Y|X) ] \; = \; \mathbb{E}_{X} \bigl[  \mathbb{E}_{Y|X}[ -\log p_{Y|X}(Y|X) ] \bigr] \nonumber \\
& = \; \mathbb{E}_{X} \bigl[  \mathbb{E}_{Y|X}[ -\log \gamma_{\sigma_X(Y)} ] \bigr] \;  
 \stackrel{(a)}{=} \; \mathbb{E}_{X} \bigl[ H({\boldsymbol\gamma}) \bigr] 
\; = \; H({\boldsymbol\gamma}).  \tag*{\mbox{$\blacksquare$}}
\end{align}

The proposed causal inference problem can be seen as an instance of the \textit{independence of cause and mechanism} principle,  with \textit{independence} corresponding to the following property: the conditional pmf of the effect given the cause has the same collection of probability values, only their positions depend on the cause. Thus, the conditional uncertainty (entropy) of the effect, given the cause, is independent of the distribution of the cause. 

A relevant fact that provides further support to the proposed principle is that the UCM can be written as an SCM \citep{pearlcausality}, as shown in the following proposition.
\begin{proposition}
Let $X\in\mathcal{X}$ and $Y\in\mathcal{Y}$ be a pair of dependent random variables such that the conditional pmf $\boldsymbol \theta^{X\rightarrow Y}$ corresponds to a UCM specified by $\boldsymbol\gamma$ and $\sigma_1,...,\sigma_{|\mathcal{X}|}$, as in \eqref{eq_gama_perm}, and the marginal $\boldsymbol \theta^{X}$ be arbitrary. Then, the joint pmf of $X$ and $Y$ is entailed by the following SCM:
\begin{equation}
X:= U_X, \hspace{1cm} Y := f_Y( X, U_Y), \label{eq:SCM}
\end{equation}
with $U_X \perp \!\!\! \perp U_Y$ (independent \textit{exogenous}  variables), $U_X \in \mathcal{X}$ has pmf $\boldsymbol \theta^{X}$, $U_Y \in \mathcal{Y}$ has pmf $\boldsymbol{\gamma}$, and $f_Y:\mathcal{X} \times \mathcal{Y} \rightarrow \mathcal{Y}$ is a  function given by $ f_Y(x,u) = \tau_x (u)$, with $\tau_x = \sigma_x^{-1}$ (inverse\footnote{Given a permutation $\sigma \in \mathbb{S}_{|\mathcal{Y}|}$, its inverse $\sigma^{-1} \in \mathbb{S}_{|\mathcal{Y}|}$ is such that $\sigma^{-1} (\sigma (i)) = i$, for any $i = 1,..., |\mathcal{Y}|$.} permutation of $\sigma_x$). Conversely, if the conditional pmf $\boldsymbol \theta^{X\rightarrow Y}$ does not correspond to a UCM, it is  impossible to write an SCM of the form \eqref{eq:SCM}, with $U_Y\in \mathcal{Y}$, entailing the joint pmf of $X$ and $Y$.

\end{proposition}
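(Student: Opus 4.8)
The plan is to prove the two implications separately: the direct one (a UCM is entailed by an SCM of the form \eqref{eq:SCM}) by an explicit verification, and the converse (only a UCM can be so entailed with $U_Y$ valued in $\mathcal{Y}$) by showing that the channel induced by \emph{any} SCM of that form is necessarily a UC.

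For the direct implication, I would check that the proposed SCM reproduces the given joint pmf. Since $X := U_X$ with $U_X$ distributed according to $\boldsymbol\theta^X$, the marginal of $X$ is correct by construction. For the conditional, fix $x \in \mathcal{X}$; because $U_X \perp \!\!\! \perp U_Y$, conditioning on $X = x$ leaves the law of $U_Y$ unchanged, so $p_{Y|X}(y|x) = \mathbb{P}[\tau_x(U_Y) = y]$. Using $\tau_x = \sigma_x^{-1}$, the event $\{\tau_x(U_Y) = y\}$ coincides with $\{U_Y = \sigma_x(y)\}$, hence $p_{Y|X}(y|x) = \gamma_{\sigma_x(y)}$, which is exactly \eqref{eq_gama_perm}. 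Multiplying the correct marginal by the correct conditional yields $p_{X,Y}$, so the SCM entails it.

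For the converse I would start from an arbitrary SCM of the form \eqref{eq:SCM} with $U_Y$ valued in $\mathcal{Y}$, let $\boldsymbol\mu \in \Delta_{|\mathcal{Y}|-1}$ be the pmf of $U_Y$, and use $U_X \perp \!\!\! \perp U_Y$ to obtain, for every $x$ and $y$,
\[
\theta_{x,y}^{X\rightarrow Y} = \mathbb{P}[f_Y(x,U_Y)=y] = \sum_{u\in\mathcal{Y}:\; f_Y(x,u)=y} \mu_u ,
\]
so that row $x$ of the channel matrix is the pushforward of $\boldsymbol\mu$ under the self-map $u \mapsto f_Y(x,u)$ of $\mathcal{Y}$. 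The key step is to show that this self-map is a bijection for every $x$: once this is established, row $x$ equals $\boldsymbol\mu$ with its entries permuted by $(f_Y(x,\cdot))^{-1}$, so all rows are permutations of the common vector $\boldsymbol\gamma := \boldsymbol\mu$; setting $\sigma_x := (f_Y(x,\cdot))^{-1}$ then exhibits the channel in the form \eqref{eq_gama_perm}, i.e., as a UCM, which establishes the equivalence.

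I expect this bijectivity step to be the main obstacle, and it is precisely where the cardinality hypothesis must enter. Since the domain and codomain of $u \mapsto f_Y(x,u)$ are both the finite set $\mathcal{Y}$, a bijection is the same as a surjection, so it suffices to argue that $f_Y(x,\cdot)$ is onto for each $x$. I would deduce this from a support-counting argument: $\boldsymbol\mu$ has full support (the exogenous variable genuinely ranges over all of $\mathcal{Y}$), and when no conditional probability $\theta_{x,y}^{X\rightarrow Y}$ vanishes, every outcome $y$ must be reached by $f_Y(x,\cdot)$, forcing surjectivity. The delicate point requiring care is the degenerate situation in which some channel entries are zero, where a non-injective $f_Y(x,\cdot)$ could in principle collapse several noise values onto one output while leaving others unreached; this case has to be excluded or handled directly. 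With surjectivity, hence bijectivity, in place, the remaining steps are routine bookkeeping, and combining the two implications gives the claimed ``if and only if''.
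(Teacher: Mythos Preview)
Your forward direction is correct and essentially identical to the paper's: both compute $p_{Y|X}(y|x)=\mathbb{P}[\tau_x(U_Y)=y]=\mathbb{P}[U_Y=\sigma_x(y)]=\gamma_{\sigma_x(y)}$.

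For the converse, your contrapositive route --- show that any SCM of the form \eqref{eq:SCM} with $U_Y\in\mathcal{Y}$ necessarily induces a UC --- is more explicit than the paper's, which offers only a one-sentence assertion that if the rows are not permutations of a common pmf then no such SCM can exist. However, the gap you flag is real and cannot be closed without an extra hypothesis. Take $\mathcal{X}=\mathcal{Y}=\{1,2\}$, $U_Y$ with pmf $(\tfrac12,\tfrac12)$, $f_Y(1,u)=1$ for both $u$, and $f_Y(2,u)=u$. This SCM meets every stated requirement (independent exogenous variables, $U_Y\in\mathcal{Y}$, dependent $X$ and $Y$), yet its induced channel has rows $(1,0)$ and $(\tfrac12,\tfrac12)$, which is not a UC. So the bijectivity of $f_Y(x,\cdot)$ genuinely fails in the degenerate case you identified, and the converse as literally stated is false. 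The natural repair is a full-support assumption on the channel (all $\theta_{x,y}^{X\rightarrow Y}>0$): then every $y$ must lie in the image of $f_Y(x,\cdot)$, giving surjectivity and hence bijectivity on the finite set $\mathcal{Y}$, after which your argument goes through cleanly. The paper's informal proof of the converse glosses over exactly the same point.
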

{\noindent\bf Proof}: 
Given $x\in\mathcal{X}$, $Y = f_Y(x,U)$ is a categorical random variable with conditional pmf
\[
p_{Y|X}(y|x) = \mathbb{P}[Y=y | X=x] = \mathbb{P}[ U_Y = \tau_x^{-1}(y) ] = \gamma_{\sigma_x(y)}
\]
(which coincides with \eqref{eq_gama_perm}), where the second equality stems from the definition of $f_Y$ and permutations being bijections. Conversely, if the conditional pmf $\boldsymbol{\theta}^{X\rightarrow Y}$ does not correspond to a UCM (neither all its rows are equal to each other, by assumption), depending on what value/category $X$ takes, the conditional pmf of $Y$ takes different probability values, not just a permutation of a common pmf, making it impossible to write an SCM of the form \eqref{eq:SCM} with $U_Y \in \mathcal{Y}$ independent of $U_X$. \hfill$\blacksquare$

\vspace{0.2cm}
It is the restriction $U_Y\in \mathcal{Y}$ that makes this result non-trivial. In fact, \citet{kocaoglu_entropic_2017} showed that, given any joint pmf $p_{X,Y}$, it is possible to write an SCM $Y := f_Y( X, U_Y)$, with $U_X \perp \!\!\! \perp U_Y$, that induces $p_{X,Y}$ and such that $U_Y$ takes values in a set of cardinality O($|\mathcal{X}|\, |\mathcal{Y}|$). 

\begin{remark}
If none, or both, of the conditional pmfs, $\boldsymbol \theta^{X\rightarrow Y}$ and $\boldsymbol \theta^{Y\rightarrow X}$, correspond to a UCM, the proposed criterion does not select a causal direction. Of course, in practice, the conditional pmfs are estimated from a finite dataset, thus there is a very small chance that one of these estimates corresponds exactly to a UCM. In Section \ref{sec:criterion_data}, we come back to this issue, proposing statistical tests do decide if a conditional pmf estimate can be considered to correspond to a UCM. In the following subsection, we assume that we have an infinite amount of data (equivalently, the true underlying pmf) and address the identifiability issue in this ideal condition. If this model was not identifiable in this ideal setting, it would be hard to argue that it could be useful with a finite amount of data.
\end{remark}

\begin{remark}
Our UCM contains as a particular case the model for ``structureless" sets proposed by \citet{anm2011}. Their model assumes a function $\phi:\mathcal{X}\rightarrow\mathcal{Y}$ and  $p_{Y|X}(y|x) = p$, if $y=\phi(x)$, and $p_{Y|X}(y|x) = (1-p)/(|\mathcal{Y}|-1)$, if $y\neq \phi(x)$. This corresponds to a UC (in fact, a CUC), with
\[
\boldsymbol\gamma = \bigl( p,(1-p)/(|\mathcal{Y}|-1),...,(1-p)/(|\mathcal{Y}|-1) \bigr)
\]
and any set of permutations such that $\sigma_x(y) = 1$, for $y=f(x)$. Our UC and CUC models are much more general, as they do not constrain the conditional pmf to have only two different values.
\end{remark}

\subsection{Identifiability}\label{Indentify}
For the proposed criterion to be useful, it should be supported by an identifiability guarantee, \textit{ i.e.}, that the set of joint probability mass functions $p_{X,Y}$ such that both $p_{Y|X}$ and $p_{X|Y}$ correspond to uniform channels should be as small as possible, ideally have zero Lebesgue measure in the space of valid parameters, thus zero probability under any continuous density \citep{anm2011}.  Before stating and proving the general identifiability result, we illustrate it for the case where both variables are binary: $\mathcal{X} = \mathcal{Y}= \{1, 2\}$. Let $p_X(1) = \theta_1^X = \beta$ and let $\boldsymbol \theta^{X\rightarrow Y}$ correspond to a UCM (in this case, simply a \textit{binary symmetric channel}) with error probability $\alpha$ \citep{code}:
\begin{align*}
    \boldsymbol \theta^{X\rightarrow Y} = \begin{bmatrix}
    1 - \alpha &  \alpha\\
    \alpha & 1 - \alpha
    \end{bmatrix}.
\end{align*}
Of course, a channel matrix where the two rows are equal to $(1-\alpha,\alpha)$ is also a UC, but in that case $X$ and $Y$ are independent, which is an uninteresting case. The channel in the reverse direction, \textit{i.e.}, $\boldsymbol \theta^{Y\rightarrow X}$, can be easily derived using Bayes law, yielding
\begin{align*}
{\small    \boldsymbol \theta^{Y\rightarrow X} = \begin{bmatrix}
    {\displaystyle \frac{(1-\alpha) \beta}{(1-\alpha) \beta + \alpha (1-\beta)}} &{\displaystyle  \frac{\alpha(1 - \beta)}{(1-\alpha) \beta + \alpha(1-\beta)}} \\
 {\displaystyle \frac{\alpha\beta}{\alpha\beta + (1-\alpha) (1 - \beta)} } & {\displaystyle \frac{(1-\alpha) (1 - \beta)}{\alpha\beta + (1-\alpha) (1 - \beta)}}
    \end{bmatrix} .}
\end{align*}
Notice that in matrix $\boldsymbol \theta^{Y\rightarrow X}$, the variable $Y$ indexes rows and $X$ indexes columns, so that it is  row-stochastic as is standard for channel matrices. Matrix $\boldsymbol\theta^{Y\rightarrow X}$ represents a UC if and only if 
one (or both) of two conditions are satisfied: the diagonal elements are equal to each other; the elements in the first column are equal to each other (in which case, $X \perp\!\!\! \perp Y$). Simple algebraic manipulation allows showing that this is equivalent to having $(\alpha,\beta) \in \{(\alpha,\beta)\in[0,1]^2: \, \alpha = 0 \, \vee\,  \alpha = 1/2 \, \vee\, \alpha = 1 \, \vee\, \beta = 0 \, \vee\, \beta = 1/2 \, \vee\, \beta = 1\}$, which has zero Lebesgue measure. The following theorem generalizes this result for arbitrary $|\mathcal{X}|$ and $|\mathcal{Y}|$.

 \begin{theorem}\label{identifiability}
Let $X \in \mathcal{X}$ and $Y\in \mathcal{Y}$ be two categorical random variables with a joint pmf such that the conditional $\boldsymbol \theta^{X\rightarrow Y}$ corresponds to a UC. Assume also that the marginals have full support\footnote{There is no loss of generality in this assumption; if there are zeros in the marginals, we simply redefine $\mathcal{X}$ or/and $\mathcal{Y}$ by removing the zero-probability elements.}: $p_Y(y) \neq 0$, for any $y\in\mathcal{Y}$, and $p_X(x) \neq 0$, for any $x\in\mathcal{X}$. Further assume that the rows of the channel matrix $\boldsymbol \theta^{X\rightarrow Y}$ are not all equal to each other (\textit{i.e.}, $X$ and $Y$ are not independent\footnote{If all the rows are equal to each other, then $Y \perp \!\!\! \perp X$; since independence is a symmetrical relationship, the reverse channel $\boldsymbol \theta^{Y\rightarrow X}$ will also have all its rows equal to each other, thus being a special case of a UC channel.}).
Then, the set of parameters such that the reverse channel $\boldsymbol \theta^{Y\rightarrow X}$ is also a UCM has zero Lebesgue measure.
\end{theorem}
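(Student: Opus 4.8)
The joint pmfs for which $\boldsymbol{\theta}^{X\rightarrow Y}$ is a UC are parameterized by a discrete tuple $(\sigma_1,\dots,\sigma_{|\mathcal{X}|})\in\mathbb{S}_{|\mathcal{Y}|}^{|\mathcal{X}|}$ together with a continuous pair $(\boldsymbol{\theta}^X,\boldsymbol{\gamma})\in\Delta_{|\mathcal{X}|-1}\times\Delta_{|\mathcal{Y}|-1}$, through $p_{X,Y}(x,y)=\theta^X_x\,\gamma_{\sigma_x(y)}$ as in \eqref{eq_gama_perm}; ``zero Lebesgue measure'' is meant in this continuous parameter space. Write $a_x:=\theta^X_x$, and assume $|\mathcal{X}|,|\mathcal{Y}|\ge 2$ (otherwise $X$ and $Y$ are trivially independent). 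A constant tuple $\sigma_1=\dots=\sigma_{|\mathcal{X}|}$ makes all rows of $\boldsymbol{\theta}^{X\rightarrow Y}$ equal, which the hypothesis excludes; hence the set in the theorem is contained in a finite union, over \emph{non-constant} tuples $(\sigma_x)_x$, of the sets $\mathcal{B}_\sigma:=\{(\boldsymbol a,\boldsymbol{\gamma})\in\Delta_{|\mathcal{X}|-1}\times\Delta_{|\mathcal{Y}|-1}:\boldsymbol{\theta}^{Y\rightarrow X}\text{ is a UC}\}$, and it suffices to show each $\mathcal{B}_\sigma$ is null. I would also lift to strictly positive weights $(\boldsymbol a,\boldsymbol{\gamma})\in\mathbb{R}_{>0}^{|\mathcal{X}|}\times\mathbb{R}_{>0}^{|\mathcal{Y}|}$: all entries of $\boldsymbol{\theta}^{Y\rightarrow X}$ are rational functions of $(\boldsymbol a,\boldsymbol{\gamma})$ invariant under separate rescalings of $\boldsymbol a$ and of $\boldsymbol{\gamma}$, so ``$\boldsymbol{\theta}^{Y\rightarrow X}$ is a UC'' is a scale-invariant subset of the positive orthant, which is null there iff its slice at the product of simplices is null.

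Next I would write $\mathcal{B}_\sigma$ as a finite union of algebraic pieces. By Bayes' rule, $\theta^{Y\rightarrow X}_{y,x}=a_x\gamma_{\sigma_x(y)}/p_Y(y)$ with $p_Y(y)=\sum_{x'}a_{x'}\gamma_{\sigma_{x'}(y)}>0$. The channel $\boldsymbol{\theta}^{Y\rightarrow X}$ is a UC precisely when there is an ``alignment'' tuple $\mu=(\mu_y)_{y\in\mathcal{Y}}\in\mathbb{S}_{|\mathcal{X}|}^{|\mathcal{Y}|}$ with $\mu_1=\mathrm{id}$ such that $\theta^{Y\rightarrow X}_{y,x}=\theta^{Y\rightarrow X}_{1,\mu_y(x)}$ for all $x,y$ (row $y$ is row $1$ permuted by $\mu_y$). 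For each of the finitely many $\mu$, clearing the nonvanishing denominators turns these equalities into polynomial equations, so $\mathcal{B}_{\sigma,\mu}$ is the positive part of a real algebraic set and $\mathcal{B}_\sigma=\bigcup_\mu\mathcal{B}_{\sigma,\mu}$. Since a finite union of \emph{proper} algebraic subsets of $\mathbb{R}^{|\mathcal{X}|+|\mathcal{Y}|}$ is Lebesgue-null, it remains to show that no $\mathcal{B}_{\sigma,\mu}$ is the whole orthant, i.e.\ that for every $\mu$ the identities $\theta^{Y\rightarrow X}_{y,x}=\theta^{Y\rightarrow X}_{1,\mu_y(x)}$ do not hold identically in $(\boldsymbol a,\boldsymbol{\gamma})$.

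This last claim is the crux. Suppose it failed for some $\mu$. Substituting the Bayes expression and dividing gives, for all $x,y$,
\[
\frac{a_x\,\gamma_{\sigma_x(y)}}{a_{\mu_y(x)}\,\gamma_{\sigma_{\mu_y(x)}(1)}}=\frac{p_Y(y)}{p_Y(1)},
\]
whose right side does not depend on $x$. Equating left sides for $x\neq x'$ and cross-multiplying yields an identity between two monomials, each a product of two $a$-variables and two $\gamma$-variables (with multiplicity) in the now algebraically independent coordinates $a_1,\dots,a_{|\mathcal{X}|},\gamma_1,\dots,\gamma_{|\mathcal{Y}|}$; matching the $a$-part forces the multiset identity $\{x,\mu_y(x')\}=\{x',\mu_y(x)\}$, which, since $x\neq x'$, forces $\mu_y$ to fix both $x$ and $x'$. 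Ranging over all pairs, $\mu_y=\mathrm{id}$ for every $y$. The identities then assert that $\boldsymbol{\theta}^{Y\rightarrow X}$ has all rows equal, i.e.\ that $X$ and $Y$ are independent, equivalently that $\boldsymbol{\theta}^{X\rightarrow Y}$ has all rows equal; but row $x$ of $\boldsymbol{\theta}^{X\rightarrow Y}$ is $\boldsymbol{\gamma}$ permuted by $\sigma_x$, and these coincide identically in $\boldsymbol{\gamma}$ only if all $\sigma_x$ agree — contradicting the choice of a non-constant tuple. Hence, for every $\mu$, $\mathcal{B}_{\sigma,\mu}$ is a proper algebraic subset.

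Consequently each $\mathcal{B}_{\sigma,\mu}$, and therefore its slice at $\Delta_{|\mathcal{X}|-1}\times\Delta_{|\mathcal{Y}|-1}$ (the boundary of the simplices being itself null), has zero Lebesgue measure; summing over the finitely many patterns $\mu$ and non-constant tuples $(\sigma_x)_x$ yields the theorem, and the $|\mathcal{X}|=|\mathcal{Y}|=2$ computation displayed above is this scheme in its smallest case. I expect the monomial-comparison step to be the main obstacle: choosing the right scale-invariant/homogeneous set of free variables is what turns ``holds for all parameters'' into an honest polynomial identity, and one then has to read off $\mu_y=\mathrm{id}$ and the constancy of $(\sigma_x)_x$ from equalities of low-degree monomials; the remaining ingredients — that ``$\boldsymbol{\theta}^{Y\rightarrow X}$ is a UC'' is a finite union of algebraic conditions, and that proper real algebraic sets are Lebesgue-null — are routine.
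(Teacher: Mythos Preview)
Your proof is correct and follows the same high-level architecture as the paper's: fix the forward permutation tuple $(\sigma_x)_x$, express ``reverse channel is a UC'' as a finite union (over alignment tuples $\mu$, the paper's $\boldsymbol{\rho}$) of polynomial conditions, and show that each such condition defines a proper algebraic set, hence a Lebesgue-null one.

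The substantive difference is in how you establish properness. The paper does it by exhibiting an explicit witness: for any non-identity $\boldsymbol{\rho}$, it sets $\boldsymbol{\gamma}=(1,\dots,1)$ and picks $\boldsymbol{\beta}\in\Delta_{|\mathcal{X}|-1}$ with pairwise distinct entries, then checks that one squared term in $Q_{\boldsymbol{\rho}}$ reduces to $(\beta_x-\beta_{\rho_y(x)})^2>0$. Your route is more algebraic: you assume the rational identities hold identically in the free variables $(a_i,\gamma_j)$, compare monomials to force $\{x,\mu_y(x')\}=\{x',\mu_y(x)\}$ and hence $\mu_y=\mathrm{id}$, and then push the remaining identities back to force $\sigma_x=\sigma_{x'}$, contradicting non-constancy. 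Both arguments are valid; the paper's is shorter and concrete, while yours gives a structural reason why no $\mu$ can work and, via the scale-invariance lift to the positive orthant, handles the passage from $\mathbb{R}^{|\mathcal{X}|+|\mathcal{Y}|}$ to the product of simplices more explicitly than the paper (which relies implicitly on the homogeneity of $Q_{\boldsymbol{\rho}}$ in $\boldsymbol{\gamma}$ to move its witness onto the simplex).
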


The proof, presented in Appendix \ref{app_proof}, essentially boils down to showing that the UC condition on the reverse channel  $\boldsymbol \theta^{Y\rightarrow X}$ corresponds to the zero set of a polynomial that is not identically zero, which thus has zero Lebesgue measure, a classical result from polynomial theory \citep{Federer1969}.

\section{Channel Estimation from Data}
\label{sec:channel}
This section addresses the problem of estimating channel parameters $\boldsymbol \theta^{X\rightarrow Y}$ from $N$ independent and identically distributed samples of $(X,Y): (x_1,y_1),..., (x_N, y_N)$. This will play a key role in translating the causal inference criterion proposed in Section \ref{sec:UCModel} to the realistic scenario where there is only access to a finite amount of data, rather than perfect knowledge of the joint pmf $p_{X,Y}$.

Before estimating the channel parameters (conditional pmf), notice that estimating the marginal pmf $\boldsymbol \theta^{X}$ is trivial: with $N_x$ denoting the number of samples $(x_i, y_i)$ with $x_i = x$, the \textit{maximum likelihood} (ML) estimate of $\boldsymbol \theta^X$ is given by 
\begin{equation}
    \hat{\boldsymbol \theta}^{X} = \; \arg \max_{{\boldsymbol \theta} \in \Delta _{|\mathcal{X}|-1}} \sum_{x\in\mathcal{X}} N_x \log  \theta^X_{x} = \Bigl( \frac{N_1}{N},...,\frac{N_{|\mathcal{X}|}}{N}\Bigr).
    \label{eq:ML_thetax}
\end{equation}

For estimating $\boldsymbol \theta^{X\rightarrow Y}$, we consider the following 4 scenarios:  \textbf{(1)} arbitrary channel; \textbf{(2)} UCM, with known permutations; \textbf{(3)} UCM,  with \textit{unknown} permutations; \textbf{(4)} CUCM, with \textit{unknown} cyclic permutations. Scenarios 1 and 2 are trivial and considered only as they provide the building blocks to address scenarios 3 and 4.

\subsection{Scenario 1: Arbitrary Channel}
\label{subsec: firstscenario}
Let $N_{x,y}$ be the number of samples $(x_i, y_i)$ such that $x_i = x$ and $y_i=y$. In the absence of constraints other than each row of $\boldsymbol \theta^{X\rightarrow Y}$ must be a valid pmf, the ML estimate is 

\begin{equation}
    \hat{\boldsymbol{\theta}}^{X\rightarrow Y}  =  \underset{\boldsymbol{\theta} \in (\Delta_{|\mathcal{Y}|-1})^{|\mathcal{X}|}}{\text{arg max}} \sum_{x\in \mathcal{X}} \sum_{y\in \mathcal{Y}}  N_{x,y}\log \theta_{x,y}.
    \label{objectivefunction}
\end{equation}
Since both the objective function and the constraints in \eqref{objectivefunction} are separable across $x= 1,..., |\mathcal{X}|$, the problem is also separable into a collection of problems, each yielding the classical ML estimates
\begin{equation}
    \hat{\theta}^{X\rightarrow Y}_{x,y} = N_{x,y}/N_x, \hspace{0.3cm} \mbox{for $(x,y)\in \mathcal{X}\times \mathcal{Y}$}.
\label{estimate_thetaxy}
\end{equation}

\subsection{Scenario 2: UCM with Known Permutations}
\label{2ndscenario}
In a UC, each row $\boldsymbol \theta_x^{X\rightarrow Y}$ is as given in \eqref{eq_gama_perm}. If the permutations $\sigma_1, \ldots, \sigma_{|\mathcal{X}|}$ are known, the ML estimate of $\boldsymbol \gamma$ is given by 
\begin{equation}
    \hat{\boldsymbol{\gamma}} = \;  \underset{\boldsymbol{\gamma} \in \Delta _{|\mathcal{Y}|-1} }{\text{arg max}} \sum_{x \in \mathcal{X}} \sum_{y\in \mathcal{Y}}  N_{x,y}\log \gamma_{\sigma_x (y)}.
    \label{objectivefunction_symmetric}
\end{equation}
This problem is not separable, as all the rows of the channel matrix share the same probability values, although with different permutations. Swapping the summation order  and using the inverse permutations $\tau_x = \sigma_x^{-1}$ to do a change of variable in the sum over $\mathcal{Y}$, problem \eqref{objectivefunction_symmetric}  can be rewritten as
\begin{equation}
    \hat{\boldsymbol{\gamma}} = \; \underset{\boldsymbol{\gamma} \in \Delta _{|\mathcal{Y}|-1} }{\text{arg max}}  \sum\limits_{z\in \mathcal{Y}} \log \gamma_z \sum_{x\in \mathcal{X}} N_{x,\tau_{x}(z)}.
    \label{eq:UDC_known}
\end{equation}
Problem \eqref{eq:UDC_known} has the same form as \eqref{eq:ML_thetax}, the solution being simply
\begin{equation}
    {\displaystyle \hat{\gamma}_y =  \frac{1}{N}\sum\limits_{x\in \mathcal{X}} N_{x, \tau_x (y)}}, \;\; \; \mbox{for $y \in \mathcal{Y}$}.
\label{approximategamma}
\end{equation}
Notice that $N_{x, \tau_x (y)}$ is the number of samples $(x_i,y_i)$ such that $x_i=x$ and $y_i = \tau_x (y)$.

\subsection{Scenario 3: UC with Unknown Permutations}
\label{subsec: thirdscenario}
In this case, the log-likelihood is maximized, not only w.r.t. $\boldsymbol\gamma$, but also the permutations. Although, at first sight, this may look like a very hard problem, as there are $(|\mathcal{Y}|!)^{|\mathcal{X}|}$ combinations of permutations, we show next that it can be solved very efficiently. The optimization problem in hand (formulated w.r.t. the inverse permutations, denoted as $\tau_1,\ldots, \tau_{|\mathcal{X}|} \in \mathbb{S}_{|\mathcal{Y}|}$) is
\begin{align}
    \hat{\boldsymbol \gamma}, \hat{\tau}_1, \ldots, \hat{\tau}_{|\mathcal{X}|} \;  = \hspace{-0.5cm} \underset{ \begin{array}{c} \boldsymbol \gamma \in \Delta _{|\mathcal{Y}|-1} \\ 
    \tau_1,...,\tau_{|\mathcal{X}|} \in \mathbb{S}_{|\mathcal{Y}|}\end{array} }{\text{arg max}} \hspace{-0.5cm} \mathcal{L}(\boldsymbol \gamma, \tau_1,...,\tau_{|\mathcal{X}|} ), \label{thirdscenario}
\end{align}
where 
\begin{align}
    \mathcal{L}(\boldsymbol \gamma, \tau_1,...,\tau_{|\mathcal{X}|} ) = \sum_{x\in \mathcal{X}} \sum_{y\in \mathcal{Y}}  N_{x,\tau_x (y)}\log \gamma_y. \label{thirdscenario_b}
\end{align}
The following proposition (proved in Appendix \ref{proof_estimate}) provides the solution to this problem.

\begin{proposition}\label{proof_UCestimate}
A globally optimal solution to the problem specified in \eqref{thirdscenario}--\eqref{thirdscenario_b} is given as follows. For $x\in \mathcal{X}$, $\hat{\tau}_x$ is any permutation that sorts $\{N_{x,1},..., N_{x,|\mathcal{Y}|}\}$ into non-increasing order,
\begin{equation}
\hat{\tau}_x\;\; \mbox{is such that}\;\; N_{x, \hat{\tau}_x (1)} \geq \cdots \geq N_{x, \hat{\tau}_x (|\mathcal{Y}|)},\label{sortedNx_a}
\end{equation}
and, for  $y \in \mathcal{Y}$,
\begin{equation}
    {\displaystyle \hat{\gamma}_y = \frac{1}{N}\sum\limits_{x\in \mathcal{X}} N_{x, \hat\tau_x (y)}}.
\label{approximategamma2_a}
\end{equation}
\end{proposition}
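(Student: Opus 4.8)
The plan is to optimize in the nested order $\max_{\boldsymbol\gamma}\big(\max_{\tau_1,\dots,\tau_{|\mathcal{X}|}}\mathcal{L}\big)$ and to exploit that the objective \eqref{thirdscenario_b} couples $\boldsymbol\gamma$ with the permutations only through the \emph{rank order} of the entries of $\boldsymbol\gamma$. First I would fix $\boldsymbol\gamma\in\Delta_{|\mathcal{Y}|-1}$; since \eqref{thirdscenario_b} is separable across $x$, the inner maximization splits into the independent problems $\max_{\tau_x\in\mathbb{S}_{|\mathcal{Y}|}}\sum_{y\in\mathcal{Y}}N_{x,\tau_x(y)}\log\gamma_y$. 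Each of these is a rearrangement problem: a sum of paired products $\sum_y a_{\tau_x(y)}\,b_y$, with $a_z=N_{x,z}$ and $b_y=\log\gamma_y$, is maximized, by the rearrangement inequality, exactly when $\tau_x$ matches the largest values of $N_{x,\cdot}$ with the largest values of $\log\gamma_\cdot$ --- equivalently, with the largest values of $\gamma_\cdot$, since $\log$ is increasing --- and the optimal value equals $\sum_{k\in\mathcal{Y}}N_x^{\downarrow}(k)\,\log\!\big(\gamma^{\downarrow}(k)\big)$, where $v^{\downarrow}(k)$ denotes the $k$-th largest entry of a vector $v$. The key observation is that a maximizing $\tau_x$ depends on $\boldsymbol\gamma$ only through its order statistics, so it can always be taken to be a permutation sorting row $x$ into non-increasing order, i.e., the $\hat\tau_x$ of \eqref{sortedNx_a}, irrespective of $\boldsymbol\gamma$.

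Next I would sum over $x$ to get $\max_{\tau_1,\dots,\tau_{|\mathcal{X}|}}\mathcal{L}(\boldsymbol\gamma,\tau_1,\dots,\tau_{|\mathcal{X}|})=\sum_{k\in\mathcal{Y}}M_k\,\log\!\big(\gamma^{\downarrow}(k)\big)$, with $M_k:=\sum_{x\in\mathcal{X}}N_x^{\downarrow}(k)=\sum_{x\in\mathcal{X}}N_{x,\hat\tau_x(k)}$. Since $N_x^{\downarrow}(1)\ge\cdots\ge N_x^{\downarrow}(|\mathcal{Y}|)$ for every $x$, the aggregated vector satisfies $M_1\ge M_2\ge\cdots\ge M_{|\mathcal{Y}|}$, and $\sum_k M_k=\sum_x\sum_y N_{x,y}=N$. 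It then remains to solve $\max_{\boldsymbol\gamma\in\Delta_{|\mathcal{Y}|-1}}\sum_k M_k\log\!\big(\gamma^{\downarrow}(k)\big)$; writing $g=\gamma^{\downarrow}$, this is the maximization of $\sum_k M_k\log g_k$ over pmfs with $g_1\ge\cdots\ge g_{|\mathcal{Y}|}$. Dropping the monotonicity constraint, the maximizer over the whole simplex is $g_k=M_k/N$, by the same argument that yields \eqref{eq:ML_thetax} and \eqref{eq:UDC_known} (equivalently, by nonnegativity of the Kullback--Leibler divergence). Because $M_1\ge\cdots\ge M_{|\mathcal{Y}|}$, this $g$ is already non-increasing, so the relaxation is tight and it solves the constrained problem as well. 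Undoing the sort gives the claimed $\hat\gamma_y=M_y/N=\frac{1}{N}\sum_{x\in\mathcal{X}}N_{x,\hat\tau_x(y)}$ of \eqref{approximategamma2_a}, paired with the $\hat\tau_x$ of \eqref{sortedNx_a}; joint optimality of this pair is immediate, since for $\hat{\boldsymbol\gamma}$ non-increasing in $y$ a rearrangement-optimal $\tau_x$ is precisely one sorting $N_{x,\cdot}$ into non-increasing order.

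The step I expect to be the main obstacle is the first one: reducing the optimization over the combinatorially large set $(\mathbb{S}_{|\mathcal{Y}|})^{|\mathcal{X}|}$ to a rearrangement statement and, in particular, establishing that the inner optimum depends on $\boldsymbol\gamma$ only through the order of its entries. That fact is what legitimizes optimizing over $\boldsymbol\gamma$ afterwards and what makes the ordered-simplex relaxation exact. The remaining points --- ties among the $N_{x,y}$, and indices $k$ with $M_k=0$ (handled by the convention $0\log 0=0$) --- only enlarge the optimizer set, consistently with the ``any permutation'' wording in the statement.
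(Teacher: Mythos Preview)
Your proposal is correct and follows essentially the same route as the paper's proof: both use the rearrangement inequality to decouple the permutation optimization from the specific values of $\boldsymbol\gamma$ (depending only on its order), and both observe that the resulting unconstrained multinomial MLE $\hat\gamma_y=M_y/N$ is automatically monotone, so the ordered-simplex constraint is inactive. The only cosmetic difference is that the paper restricts $\boldsymbol\gamma$ to the monotone cone $\mathcal{K}_{|\mathcal Y|}$ up front as a WLOG assumption, whereas you keep $\boldsymbol\gamma$ general and pass to $\gamma^{\downarrow}$; the logic is otherwise identical.
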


The solution in \eqref{sortedNx_a}--\eqref{approximategamma2_a} is a global, but not unique, optimum; in fact, any pmf $\hat{\boldsymbol{\xi}}$ that is a permutation of $\hat{\boldsymbol{\gamma}}$, \textit{i.e.}, $\hat{\gamma}_y = \hat{\xi}_{\rho(y)}$, where $\rho \in \mathbb{S}_{|\mathcal{Y}|}$, yields 
\[
\hat{\theta}_{x,y}^{X\rightarrow Y} = \hat{\gamma}_{\hat{\sigma}_x(y)} = \hat{\xi}_{\rho(\hat{\sigma}_x(y))}.
\]
That is, $\boldsymbol\gamma$ is identifiable only up to a permutation, since any permutation of $\hat{\boldsymbol{\gamma}}$, combined with the inverse of that permutation composed with each $\sigma_x$, yields the same conditional pmf estimate $\hat{\boldsymbol{\theta}}^{X\rightarrow Y}\!\!\!$, thus the same maximum value of the log-likelihood. Finally, notice that the cost of computing this solution scales as $O(|\mathcal{X}|\, |\mathcal{Y}|\, \log |\mathcal{Y}|)$, due to the number $|\mathcal{X}|$ of sorting operations, each of size $|\mathcal{Y}|$.

\subsection{Scenario 4: CUC with Unknown Permutations}
\label{subsec:fourth}
The difference between this and the previous case is that the permutations are now cyclic. Thus, the corresponding optimization problem is identical to \eqref{thirdscenario}--\eqref{thirdscenario_b}, but with the constraint $\tau_1,...,\tau_{|\mathcal{X}|} \in \mathbb{S}_{|\mathcal{Y}|}$ replaced with $\tau_1,...,\tau_{|\mathcal{X}|} \in \mathbb{C}_{|\mathcal{Y}|}$, where $\mathbb{C}_{|\mathcal{Y}|}$ is the set of cyclic permutations of $\{1,...,|\mathcal{Y}|\}$. 

Although the cardinality of $\mathbb{C}_{|\mathcal{Y}|}$ is $|\mathcal{Y}|$, much smaller than that of $\mathbb{S}_{|\mathcal{Y}|}$, which is $|\mathcal{Y}|!$, this problem is harder than  \eqref{thirdscenario}--\eqref{thirdscenario_b}. Whereas the cost of the exact solution of \eqref{thirdscenario}--\eqref{thirdscenario_b} scales with $O(|\mathcal{X}|\, |\mathcal{Y}|\, \log |\mathcal{Y}|)$, exactly solving this problem by exhaustive search costs $O(|\mathcal{Y}|^{|\mathcal{X}|})$. It happens that this problem is a variant of a class of problems known as \textit{multireference alignment}, which is known to be NP-hard \citep{Bandeira}. Exact solutions are thus out of the question for large problems. Here, we propose an alternating maximization approach with two steps:
\begin{itemize}[leftmargin=0.8cm,itemsep=0cm]
    \item Given the current permutation estimates $\hat{\tau}_1, ..., \hat{\tau}_{|\mathcal{X}|}$, update $\hat{\boldsymbol{\gamma}}$ according to \eqref{approximategamma}, with $\tau_x = \hat{\tau}_x$. 
    \item Given the current $\hat{\boldsymbol \gamma}$, maximize w.r.t. the permutations, which is separable across $\tau_1,...,\tau_{|\mathcal{X}|}$:
    \begin{equation}
    \hat{\tau}_x = \underset{\tau \in \mathbb{C}_{|\mathcal{Y}|}}{\text{arg max}} \sum\limits_{y=1}^{|\mathcal{Y}|} N_{x,\tau(y)} \log \hat{\gamma}_y, \;\;\mbox{for $x=1,...,|{\mathcal X}|$.}
    \label{eq: cyclic}
\end{equation}
    This maximization is carried out exactly by considering all the $|\mathcal{Y}|$ cyclic permutations. 
    \end{itemize}
The costs of both steps of this algorithm scale as $O (|\mathcal{X}|\, |\mathcal{Y}|)$. Convergence can be proved via the same approach that is used to prove convergence of the $K$-means algorithm \citep{kmeans}, since both algorithms share a common structure: alternate between exact maximization with respect to real quantities (cluster centers, in K-means, $\boldsymbol{\gamma}$ in our algorithms) and an exact combinatorial optimization (the point-to-cluster assignments in K-means, the cyclic permutations in the proposed algorithms).

\section{Applying the UCM Principle from Data}
\label{sec:criterion_data}
Applying the proposed causal inference principle amounts to performing  hypothesis testing concerning the UC nature of the conditional pmf estimates $\hat{\boldsymbol\theta}^{X\rightarrow Y}$ and $\hat{\boldsymbol\theta}^{Y\rightarrow X}$. This is closely related to classical tests for two-way contingency tables \citep{Agresti, Read}. Given a table of counts $N_{x,y}$, let the \textit{null hypothesis} $H_0$ be that these counts can be explained by a UCM in the $X\rightarrow Y$ direction. To test this hypothesis, consider the corresponding maximum log-likelihood (noting that $p_{X,Y}(x,y) = \mathbb{P}[X=x,Y=y] = \theta_x^{X} \, \gamma_{\sigma_x(y)}$, for a UCM),
\begin{equation}
\mathcal{L}_{H_0} = \sum\limits_{x \in \mathcal{X}}\sum\limits_{y \in \mathcal{Y}} N_{x,y} \log\bigl( \hat{\theta}_x^{X} \, \hat{\gamma}_{\hat{\sigma}_x(y)}\bigr) = \sum\limits_{x \in \mathcal{X}} N_x \log  \hat{\theta}_x^{X} + \sum\limits_{x \in \mathcal{X}}\sum\limits_{y \in \mathcal{Y}} N_{x,y} \log\bigl( \hat{\gamma}_{\hat{\sigma}_x(y)}\bigr) ,
\end{equation}
where $\hat{\boldsymbol\theta}^X$, $\hat{\sigma_1},...,\hat{\sigma}_{|\mathcal{X}|}$, and $\hat{\boldsymbol\gamma}$ are the ML estimates obtained as shown in Section \ref{sec:channel}.

The alternative hypothesis is that the channel is arbitrary, with maximum log-likelihood 
\begin{equation}
\mathcal{L}_{\bar{H}_0} = \sum\limits_{x \in \mathcal{X}} N_x \log  \hat{\theta}_x^{X} + \sum\limits_{x \in \mathcal{X}}\sum\limits_{y \in \mathcal{Y}} N_{x,y} \log\bigl( N_{x,y}/N_x\bigr),
\end{equation}
since the ML estimates of the conditional pmf parameters are as given in \eqref{estimate_thetaxy}, and the ML estimate of the marginal $\boldsymbol\theta^X$  is the same, regardless of the channel being uniform or not. These models are nested: a UCM is a particular case of the set of all valid channels, thus it is always true that $\mathcal{L}_{H_0} \leq \mathcal{L}_{\bar{H}_0}$.

The \textit{likelihood-ratio statistic} (LRS), denoted $G^2$, is then given by 
\begin{equation}
G_{X\rightarrow Y}^2 = 2 (\mathcal{L}_{\bar{H}_0} - \mathcal{L}_{H_0}) =  2 \sum\limits_{x \in \mathcal{X}}\sum\limits_{y \in \mathcal{Y}} N_{x,y} \log\Bigl( \frac{N_{x,y}}{\hat{\gamma}_{\hat{\sigma}_x(y)} N_x}\Bigr);
\end{equation}
notice that $\hat{\gamma}_{\hat{\sigma}_x(y)} N_x$ is the expected value of $N_{x,y}$ under the null hypothesis. This is the LRS in the $X\rightarrow Y$ direction, which we indicate with the subscript $X\rightarrow Y$. The LRS in the reverse direction, denoted $G_{Y\rightarrow X}^2$, is computed in the same way, after swapping the roles of $X$ and $Y$.


It is well known that $G^2$ is asymptotically $\chi^2$-distributed with df $=(|\mathcal{X}|-1)(|\mathcal{Y}|-1)$ \textit{degrees of freedom}, yielding the ${\tt p}$-value 
\[
{\tt p} = \mathbb{P}[ \chi^2_{\mbox{df}} \geq G^2] = 1-\mathbb{P}[ \chi^2_{\mbox{df}} < G^2],
\]
where $\mathbb{P}[ \chi^2_{\mbox{df}} < G^2]$ is the cumulative distribution function of a $\chi_{\mbox{df}}^2$ distribution. If ${\tt p}$ is less than some significance level (\textit{i.e.}, the test statistic $G^2$ is too large), the null hypothesis is rejected.

Let ${\tt p}^{X\rightarrow Y}$ and ${\tt p}^{Y\rightarrow X}$ be the ${\tt p}-$values of the LRS for testing the uniformity of the channels in both directions, and let $\alpha$ be a significance level for the test \citep{Agresti, Read}, \textit{i.e.}, the null hypothesis is rejected if the ${\tt p}-$value is less than $\alpha$. Having a statistical test of whether an estimated conditional pmf corresponds to a UCM, we adopt a procedure similar to the one proposed by \citet{anm2011}.
    \begin{itemize}
        \item If ${\tt p}^{X\rightarrow Y} \geq \alpha$ and  ${\tt p}^{Y\rightarrow X} < \alpha$, declare $X\rightarrow Y$.
    \item If ${\tt p}^{X\rightarrow Y} < \alpha$ and  ${\tt p}^{Y\rightarrow X} \geq \alpha$, declare $Y\rightarrow X$.
     \item If ${\tt p}^{X\rightarrow Y} < \alpha$ and  ${\tt p}^{Y\rightarrow X} < \alpha$, declare "undecided: wrong model".
     \item If ${\tt p}^{X\rightarrow Y} \geq \alpha$ and  ${\tt p}^{Y\rightarrow X} \geq \alpha$, declare "undecided: both directions possible".
     \end{itemize}

The fourth case (\textit{i.e.}, the hypotheses that the channel is uniform in both directions cannot be rejected) is asymptotically improbable, unless $X$ and $Y$ are independent, due to the identifiability guarantee. Alternatively, to force the method to make a decision between the two causal directions, one may simply decide for $X\rightarrow Y$, if ${\tt p}^{X\rightarrow Y} > {\tt p}^{Y\rightarrow X}$, and for $Y\rightarrow X$, otherwise. 



\section{Results}
\label{sec:resul}
We compare the proposed approach, on synthetic, benchmark, and real data, with two state-of-the-art methods for categorical variables, for which code is publicly available: DC \citep{dc} ({\small \url{eda.mmci.uni-saarland.de/prj/cisc/}}, with $\epsilon = 0$) and HCR \citep{hcr} (a Python version of the $R$ code available at {\small \url{cran.r-project.org/web/packages/HCR/index.html}}). The code for all the experiments will be made available upon acceptance of the manuscript. 


In the ML estimates underlying our approach (namely \eqref{estimate_thetaxy} and \eqref{approximategamma}), to avoid the problem of zero or vanishing probabilities, we use a small amount ($10^{-3}$) of additive (a.k.a. Dirichlet) smoothing.

\subsection{Identifying the UCM Direction}
The first set of experiments is a sanity check, assessing the ability of the proposed criterion to identify the UCM direction, using synthetic data, with different sample sizes $N$ 
and different sizes of the support sets, $|\mathcal{X}|$, and $|\mathcal{Y}|$.
For each pair $(|\mathcal{X}|, |\mathcal{Y}|)$ and each $N$, we generate $100$ independent datasets using randomly generated UCMs in the $X\rightarrow Y$ direction and the results reported for each $N$ are the corresponding averages. The decision rule is simply to choose $X\rightarrow Y$, if ${\tt p}^{X\rightarrow Y} \geq {\tt p}^{Y\rightarrow X}$ (equivalently, $G_{X\rightarrow Y}^2 \leq G_{Y\rightarrow X}^2$), and $Y\rightarrow X$ (which is wrong), otherwise. The results in Fig. \ref{fig:prob} show that the accuracy achieves high values, close to 100\%, for $N> 500\sim 1000$, without a clear effect of the sizes of the support sets or difference between the non-cyclic and the cyclic cases.

\begin{figure*}[ht]
\center
\includegraphics[scale=0.49]{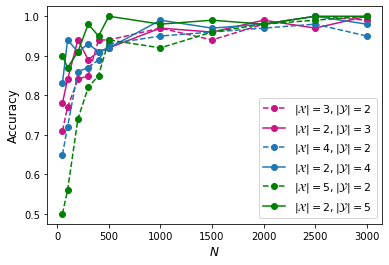}
\qquad 
\includegraphics[scale=0.49]{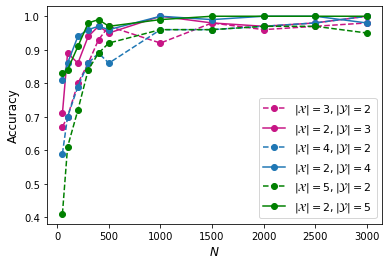}
\vspace{-0.1cm}
\caption{Accuracy in selecting the UCM direction, using the proposed criterion, for different sample and support sizes. Left plot: general UCMs; right plot: cyclic UCMs (CUCM).}
\label{fig:prob}
\end{figure*}

\subsection{Benchmark Data}
We use the 112 pairs in the \textit{cause-effect pairs} benchmark set \citep{causeeffectwebsite} where both variables are categorical and that have as ground truth that either $X \rightarrow Y$ or $Y \rightarrow X$. We set $\alpha = 0.05$ and compare UCM with the two methods mentioned above: DC \citep{dc} and HCR \citep{hcr}. Decisions of ``undecided'' are counted as wrong. The average accuracies of UCM, DC, and HCR reported in Table \ref{tab:results_bench} shows that UCM outperforms both HCR and DC on this dataset. Notice that a random decision would yield accuracy equal to 1/3.

\begin{table*}[hbt]
\centering
     \caption{Average accuracy results on the 112 pairs of the benchmark dataset.}
\vspace{0.5cm}
     \label{tab:results_bench}
     \makebox[\linewidth]{
     \begin{tabular}{| c | c |  c |}
     \hline
     UCM & DC & HCR \\
     \hline\hline
     0.61 & 0.41 & 0. 47\\
     \hline
\end{tabular} }
\end{table*}

\subsection{Real Data}\label{sec:real_data}
Finally, we evaluate the UCM method (again with $\alpha = 0.05$) on real data from the \textit{UCI Machine Learning Repository} \citep{uciwebsite}. We use pairs of variables from the following datasets: Adult, Pittsburgh Bridges, Accute Inflamation, Temperature, and Horse Colic. The datasets and selected pairs, as well as the criteria used to decide what is the ground truth causal direction, are described in Appendix \ref{app_data}. We include only pairs for which a test of independence, at significance level 0.05 \citep{Agresti}, rejects the null hypothesis of independence. Furthermore, we include only pairs for which at least one of the three tested methods chooses one of the causal directions. 
Table \ref{tab:results} shows that the UCM and HCR approaches found the ``correct" causal direction in 5 out of 9 pairs, and DC in 4 pairs. UCM returned only correct decisions or abstained from deciding. This small number of experiments does now allow for reaching any strong conclusions but suggests that UCM performs on par, arguably somewhat better, with DC and HCR.

\begin{table*}[hbt]
\centering
\normalsize
{
     \caption{Results on real data. Wrong decisions are shown in red; UWM stands for "undecided: wrong model". Month is a cyclic variable, thus a CUC was used in the $Y\rightarrow X$ direction.}
\vspace{-0.2cm}
     \label{tab:results}
     \makebox[\linewidth]{
     \begin{tabular}{ c | c  c  c  c c}
     Dataset & $X$ & $Y$ & UCM & DC & HCR\\[0.1cm]
     \hline
     Adult & Occupation & Income & UWM & $X
\rightarrow Y$ &$X \rightarrow Y$ \\
     Adult & Work Class & Income & UWM & $X
\rightarrow Y$   &$X \rightarrow Y$    \\
      Acute Inflammation & Inflam. of urinary bladder & Lumbar pain & $Y
\rightarrow X$ & Inconcl. &  Inconcl.\\
      Acute Inflammation & Inflam. of urinary bladder & Nausea & $Y
\rightarrow X$ & Inconcl. &  Inconcl.\\
 Acute Inflammation & Inflam. of urinary bladder & Burning urethra & $Y
\rightarrow X$ & Inconcl. &  Inconcl.\\
     Pittsburgh Bridges & Material & Lanes & $X \rightarrow Y$ &
\color{red} $Y \rightarrow X$ & $X \rightarrow Y$ \\
     Pittsburgh Bridges & Purpose & Type & UWM &
\color{red} $Y \rightarrow X$ & \color{black} $X \rightarrow Y$ \\
     Temperature & Month & Temperature & $X \rightarrow Y$ &
$X \rightarrow Y$ & \color{red} $Y \rightarrow X$ \\
     Horse Colic & Abdomen Status & Surgical Lesion & UWM &
$X \rightarrow Y$ & $X \rightarrow Y$ \\
\hline 
     \end{tabular} }}
\end{table*}

\vspace{-0.3cm}
\section{Conclusions}
\label{sec:concl}
We introduced the \textit{uniform channel model} (UCM) to address the cause-effect problem with categorical variables. The proposed approach is based on viewing conditional distributions as communication channels. The UCM can be seen as an ANM-type instantiation of the principle of \textit{independence of cause and mechanism}, preserving a key feature of ANM for quantitative data: the conditional entropy (uncertainty) of the effect given the cause is independent of the cause. The core results of this paper are a proof of the identifiability of the UCM and a proof of its equivalence to a structural causal model with an exogenous variable of fixed cardinality.

To instantiate the approach on finite data, we used classical statistical tests to decide in which of the two directions (if any) the conditional distribution is close enough to correspond to a UCM. The experimental results confirmed the adequacy of the proposed method. By comparing our method with two other recent methods (DC, by \citet{dc}, and HCR, by \citet{hcr}), we found that UCM outperforms those other methods on benchmark datasets and performs on par with those methods on real data.  

As future work, we will aim to extend the proposed method to handle more than two variables. For example, closeness to a uniform channel of the conditional distribution of each variable given its parents can be used in a score-based method. Another direction of research will look at  cases where one variable is categorical and the other is continuous.  In fact, we envision a generalization that subsumes both ANM and the proposed UCM as follows. In the correct causal direction, the conditional distribution of the effect should be equivariant under some transformation group (the element of which is selected by the cause) that is relevant to its domain: shifts, in the ANM case, permutations, for categorical variables, and cyclic permutations for variables with cyclic structure. If this equivariance holds in the causal direction but not in the reverse one, we have identifiability.

\section*{Acknowledgments}
We thank Afonso Bandeira, André Gomes, Francisco Andrade, João Xavier, and José Mourão, for fruitful conversations and important (some of them instrumental) suggestions. We also thank the anonymous reviewers for some important suggestions.



\clearpage
\bibliography{references}

\clearpage
\appendix
\section{Proof of Theorem \ref{identifiability}}
\label{app_proof}
{\noindent\bf Proof}: Let us denote $\boldsymbol{\theta}^X = \boldsymbol{\beta}  \in \Delta_{|\mathcal{X}|-1}$ and recall that under the UC assumption, matrix $\boldsymbol \theta^{X\rightarrow Y}$ has the form
\[
\boldsymbol \theta^{X\rightarrow Y} = \begin{bmatrix}
\gamma_{\sigma_1(1)} & \gamma_{\sigma_1(2)} & \cdots & \gamma_{\sigma_1(|\mathcal{Y}|)}\\
\gamma_{\sigma_2(1)} & \gamma_{\sigma_2(2)} & \cdots & \gamma_{\sigma_2(|\mathcal{Y}|)} \\
\vdots & \vdots & \ddots & \vdots \\
\gamma_{\sigma_{|\mathcal{X}|}(1)} & \gamma_{\sigma_{|\mathcal{X}|}(2)} & \cdots & \gamma_{\sigma_{|\mathcal{X}|}(|\mathcal{Y}|)}
\end{bmatrix},
\]
where $\sigma_1,...,\sigma_{|\mathcal{X}|}\in \mathbb{S}_{|\mathcal{Y}|}$ are permutations and 
$\boldsymbol{\gamma} = (\gamma_1, ...., \gamma_{|\mathcal{Y}|} )\in \Delta_{|\mathcal{Y}|-1}$. The assumption that the rows of this matrix are not all equal to each other precludes the two following condition from holding: $\sigma_1 = \sigma_2 = \cdots = \sigma_{|\mathcal{X}|}$ and  $\boldsymbol{\gamma} = (1,1,...,1)/|\mathcal{Y}|$.

Using Bayes' law, it is trivial to obtain the reverse channel, the elements of which are given by
\begin{equation}
{\theta}_{y,x}^{Y\rightarrow X} = p_{X|Y}(x|y) = \frac{p_{Y|X}(y|x)\, p_X(x)}{p_{Y}(y)} = \frac{ \gamma_{\sigma_x(y)} \, \beta_x }{ \sum_{x'\in\mathcal{X}} \gamma_{\sigma_{x'}(y)} \, \beta_{x'}  } =  \frac{a_{y,x}}{A_y} , \label{eq:reverse_theta}
\end{equation}
where $a_{y,x} = \beta_x \gamma_{\sigma_x(y)} $ and $A_y = p_Y(y) \neq 0$ (by assumption).
As in the binary example, using variables $Y$ and $X$ to index rows and columns, respectively, $\boldsymbol{\theta}^{Y\rightarrow X}$ is a row-stochastic matrix:
\[
\boldsymbol{\theta}^{Y\rightarrow X} = \begin{bmatrix} 
a_{1,1}/A_1  & \cdots & a_{1,|\mathcal{X}|}/A_1\\
\vdots & \ddots & \vdots \\
a_{|\mathcal{Y}|,1}/A_{|\mathcal{Y}|} & \cdots & a_{|\mathcal{Y}|,|\mathcal{X}|}/A_{|\mathcal{Y}|}
\end{bmatrix}.
\]
For $\boldsymbol{\theta}^{Y\rightarrow X}$ to correspond to a UC, its rows must be permutations of each other, which is equivalent to all being permutations of one of them, say the first, without loss of generality. We exclude the case where these permutations are all equal to identity, since that would correspond to all rows of $\boldsymbol{\theta}^{Y\rightarrow X}$ being equal to each other, \textit{i.e.}, $X \perp\!\!\!\perp Y$,  which is excluded in the conditions of the theorem. The condition that all the rows are permutations of the first one can be written formally as
\begin{equation}
\exists (\rho_2, ..., \rho_{|\mathcal{Y}|}) \in \mathbb{L}   : \forall y\in \mathcal{Y}\setminus \{1\}, \,\forall x\in\mathcal{X}, \; a_{1,x} / A_1 = a_{y,\rho_y(x)} / A_y , \label{eq:condition}
\end{equation}
where $\mathbb{L} = (\mathbb{S}_{|\mathcal{X}|})^{|\mathcal{Y}|-1} \setminus \mathbb{I}$, with $\mathbb{I} = \left\{\rho_2, ..., \rho_{|\mathcal{Y}|}: \rho_2 = ... =  \rho_{|\mathcal{Y}|} = \iota \right\}$, and $\iota$ is the identity permutation. In words, $\mathbb{L}$ is the set of all $(|\mathcal{Y}|-1)$-tuples of permutations of $|\mathcal{X}|$ elements, except for the one in which all permutations are identity. 

The equality $a_{1,x} / A_1 = a_{y,\rho_y(x)} / A_y$ is equivalent to $(a_{1,x} \, A_y  - a_{y,\rho_y(x)} \, A_1 )^2 = 0$,
thus the following equivalence holds:
\[
\bigl(\forall y\in \mathcal{Y}\setminus \{1\}, \,\forall x\in\mathcal{X}, \; a_{1,x} / A_1 = a_{y,\rho_y(x)} / A_y\bigr) \;\;\; \Leftrightarrow  \;\;\; Q_{\boldsymbol{\rho}}(\boldsymbol{\theta})  = 0,
\]
with 
\begin{equation}
Q_{\boldsymbol{\rho}}(\boldsymbol{\theta}) = \sum_{y\in \mathcal{Y}\setminus \{1\}} \sum_{x\in\mathcal{X}} (a_{1,x} \, A_y  - a_{y,\rho_y(x)} \, A_1 )^2,\label{eq_Qrho}
\end{equation}
where we have written the model parameters compactly as $\boldsymbol{\theta} = (\boldsymbol{\beta} ,\boldsymbol{\gamma}) \in \Delta_{|\mathcal{X}|-1} \times \Delta_{|\mathcal{Y}|-1} \subset \mathbb{R}^{|\mathcal{X}|} \times \mathbb{R}^{|\mathcal{Y}|}$, and denoted $\boldsymbol{\rho} = ( \rho_2, ..., \rho_{|\mathcal{Y}|} ) \in \mathbb{L}$. A key observation is that $Q_{\boldsymbol{\rho}}(\boldsymbol{\theta})$ is a polynomial in the elements of $\boldsymbol{\theta}$, since the $a_{y,x}$ and the $A_y$ are themselves polynomials (either products of two elements or sums of products of pairs of elements) as is clear in \eqref{eq:reverse_theta}.  

Finally, the existential quantifier  in  \eqref{eq:condition} can be re-written using a product, \textit{i.e.},
\[
\bigl( \exists \boldsymbol{\rho} \in \mathbb{L} :\;  Q_{\boldsymbol{\rho}}(\boldsymbol{\theta}) = 0 \bigr) \;\;\;\Leftrightarrow \;\;  R(\boldsymbol{\theta}) = 0, \hspace{0.7cm} \mbox{where}\;\; R(\boldsymbol{\theta}) \; =\!\!  \prod_{\boldsymbol{\rho} \in \mathbb{L} } Q_{\boldsymbol{\rho}}(\boldsymbol{\theta}).
\]
Since $R(\boldsymbol{\theta})$ a product of polynomials, it is itself a polynomial. Consequently, we have shown that the UC condition in \eqref{eq:condition} corresponds to having $\boldsymbol{\theta}$ as a root of a polynomial. 

The rest of the proof relies on a classical result about polynomials \citep{Federer1969}: let $S:\mathbb{R}^n\rightarrow \mathbb{R}$ be a polynomial that is not identically zero; then, the set $S^{-1}(0) = \{{\bf u}\in\mathbb{R}^n: \, S({\bf u})=0\}$ has zero Labesgue measure in $\mathbb{R}^n$. All that is left to show then is that $R(\boldsymbol{\theta})$ is not identically zero. For this purpose, we can ignore the valid parameter space $\Delta_{|\mathcal{X}|-1} \times \Delta_{|\mathcal{Y}|-1}$, because if $R^{-1}(0)$ has zero Lebesgue measure in $\mathbb{R}^{|\mathcal{X}|} \times \mathbb{R}^{|\mathcal{Y}|}$, so does the intersection $R^{-1}(0) \cap (\Delta_{|\mathcal{X}|-1} \times \Delta_{|\mathcal{Y}|-1})$. We can also ignore the condition  $\boldsymbol{\gamma} \neq (1,...,1)/|\mathcal{Y}|$, since this is a single point, thus a set of zero measure. 

A sufficient and necessary condition for $R(\boldsymbol{\theta})$ not to be identically zero is that none of its factors $Q_{\boldsymbol{\rho}}(\boldsymbol{\theta})$ is identically zero\footnote{Recall that a product of two polynomials with real coefficients is identically zero only if at least one of the factors is identically zero. This is a classical result from abstract algebra, which in the language thereof is stated as follows: the ring of all polynomials in $n$ variables with real coefficients is an \textit{integral domain} or \textit{entire ring}, that is, it does not have divisors of zero \citep{Lang}. The result generalizes trivially, by induction, to products of more than two polynomials.}. To show that no $Q_{\boldsymbol{\rho}}(\boldsymbol{\theta})$ is  identically zero, let us write it explicitly, using the definitions of $a_{y,x}$ and $A_y$ in \eqref{eq:reverse_theta}:
\begin{equation}
Q_{\boldsymbol{\rho}}(\boldsymbol{\theta}) = \sum_{y\in \mathcal{Y}\setminus \{1\}} \sum_{x\in\mathcal{X}} \Bigl( \beta_x \gamma_{\sigma_x(1)} \sum_{x'\in \mathcal{X}} \beta_{x'} \gamma_{\sigma_{x'}(y)} - \beta_{\rho_{y}(x)} \gamma_{\sigma_{\rho_y(x)}(y)}\sum_{x'\in \mathcal{X}} \beta_{x'} \gamma_{\sigma_{x'}(1)} \Bigr)^2.\label{eq_Qrho2}
\end{equation}
Since $Q_{\boldsymbol{\rho}}(\boldsymbol{\theta})$ is a sum of non-negative terms, to show that it is not identically zero, it suffices to show that one of the terms in the sum is strictly positive for some choice of $\boldsymbol{\theta}$. The condition $\boldsymbol{\rho} = (\rho_2,...,\rho_{|\mathcal{Y}|}) \in \mathbb{L}$ means that at least one of the permutations  $\rho_2,...,\rho_{|\mathcal{Y}|}$ is not the identity, which implies that there is at least one pair $(x,y)$ such that $\rho_y (x) \neq x$. Let $y$ and $x$ be one such pair. Choosing $\boldsymbol{\gamma} = (1,...,1)$ and $\boldsymbol{\beta} \in \Delta_{|\mathcal{X}|-1}$ such that all components are different from each other ($i\neq j\, \Rightarrow \beta_i \neq \beta_j$), we have (noticing that $\sum_{x'\in\mathcal{X}}\beta_{x'} = 1$)
\begin{align}
Q_{\boldsymbol{\rho}}(\boldsymbol{\theta}) & = \Bigl( \beta_x  \sum_{x'\in \mathcal{X}} \beta_{x'} - \beta_{\rho_{y}(x)} \sum_{x'\in \mathcal{X}} \beta_{x'} \Bigr)^2 +  
\sum_{y' \neq y } \sum_{x'\neq x } (\cdots)^2 \label{eq_Qrho3}\\
& = \bigl( \beta_x - \beta_{\rho_{y}(x)} \bigr)^2 +\; \mbox{non-negative terms} > 0.
\end{align}
In conclusion, since none of the $Q_{\boldsymbol{\rho}}(\boldsymbol{\theta})$ polynomials is identically zero, $R(\boldsymbol{\theta})$ is also not identically zero, consequently its zero set has zero Lebesgue measure. 
\hfill $\blacksquare$

\section{Proof of Proposition \ref{proof_UCestimate}}
\label{proof_estimate}
{\noindent\bf Proof}: Noticing that the permutations that map $\boldsymbol{\gamma}$ to each row of $\boldsymbol{\theta}^{Y|X}$ are arbitrary, there is no loss of generality in assuming $\gamma_1 \geq \gamma_2 \geq \cdots \geq \gamma_{|\mathcal{Y}|}$, \textit{i.e.}, $\boldsymbol{\gamma}\in \mathcal{K}_{|\mathcal{Y}|}$, the so-called \textit{monotone cone} \citep{Best1990}. 
Furthermore, it is more convenient to formulate the problem w.r.t. the inverse permutations, denoted as $\tau_1,\ldots, \tau_{|\mathcal{X}|} \in \mathbb{S}_{|\mathcal{Y}|}$. The problem can thus be written as
\begin{align}
    \hat{\boldsymbol \gamma}, \hat{\tau}_1, \ldots, \hat{\tau}_{|\mathcal{X}|} \;  = \hspace{-0.5cm} \underset{ \begin{array}{c} \boldsymbol \gamma \in (\Delta _{|\mathcal{Y}|-1} \cap \mathcal{K}_{|\mathcal{Y}|}) \\ 
    \tau_1,...,\tau_{|\mathcal{X}|} \in \mathbb{S}_{|\mathcal{Y}|}\end{array} }{\text{arg max}} \hspace{-0.5cm} \mathcal{L}(\boldsymbol \gamma, \tau_1,...,\tau_{|\mathcal{X}|} ), \label{thirdscenarioc}
\end{align}
where 
\begin{align}
    \mathcal{L}(\boldsymbol \gamma, \tau_1,...,\tau_{|\mathcal{X}|} ) = \sum_{x\in \mathcal{X}} \sum_{y\in \mathcal{Y}}  N_{x,\tau_x (y)}\log \gamma_y. \label{thirdscenario_bc}
\end{align}
 
The assumption $\gamma_1 \geq \gamma_2 \geq \cdots \geq \gamma_{|\mathcal{Y}|}$ makes the  maximization w.r.t. $\tau_1,...,\tau_{|\mathcal{X}|}$ independent of the particular values of $\boldsymbol{\gamma}$ as well as separable into a collection of $|\mathcal{X}|$ independent maximizations, 
\begin{equation}
    \hat{\tau}_x = \underset{ \tau_x \in  \mathbb{S}_{|\mathcal{Y}|}}{\text{arg max}} \sum_{y\in \mathcal{Y}}  N_{x,\tau_x (y)}\log \gamma_y,\label{thirdscenario2}
\end{equation}
for $x\in\mathcal{X}$. Solving \eqref{thirdscenario2} is a simple application of the \textit{rearrangement inequality}\footnote{Given any two non-decreasing sequences of $n$ real numbers, $x_1 \leq \ldots \leq x_n$ and $y_1 \leq \ldots \leq y_n$,  
\[
\forall\sigma \in \mathbb{S}_n, \;\; \sum\limits_{i=1}^n x_{\sigma_{(i)}} y_i \leq \sum\limits_{i=1}^n x_i y_i.
\]} \citep{rearrangementinequality}. Since $\log \gamma_1 \geq \log \gamma_2 \geq \cdots \geq \log \gamma_{|\mathcal{Y}|}$, the solution $\hat{\tau}_x$ is any permutation that also sorts $\{N_{x,1},..., N_{x,|\mathcal{Y}|}\}$ into non-increasing order:
\begin{equation}
\hat{\tau}_x\;\; \mbox{is such that}\;\; N_{x, \hat{\tau}_x (1)} \geq \cdots \geq N_{x, \hat{\tau}_x (|\mathcal{Y}|)}.\label{sortedNx}
\end{equation}
If all the elements of $\{N_{x,1},..., N_{x,|\mathcal{Y}|}\}$ are different, the optimal permutation is unique; otherwise, there are several optimal permutations, all achieving the same maximum. The cost of finding $\hat{\tau}_1, \ldots, \hat{\tau}_{|\mathcal{X}|}$ is $O(|\mathcal{X}|\, |\mathcal{Y}|\, \log |\mathcal{Y}|)$, since it requires $|\mathcal{X}|$ sorting operations, each with $|\mathcal{Y}|$ elements. 

Plugging $\hat{\tau}_1, \ldots, \hat{\tau}_{|\mathcal{X}|}$ back into \eqref{thirdscenario}--\eqref{thirdscenario_b} and swapping the summation order, yields 
\begin{align}
    \hat{\boldsymbol{\gamma}} \;\;  = \!\! \underset{\boldsymbol \gamma \in (\Delta _{|\mathcal{Y}|-1} \cap \mathcal{K}_{|\mathcal{Y}|}) }{\text{arg max}} \sum_{y\in \mathcal{Y}}  \log \gamma_y \sum_{x\in \mathcal{X}}  N_{x,\hat\tau_x (y)}. \label{thirdscenario3}
\end{align}
This problem is the same as \eqref{eq:UDC_known}, with $\hat\tau_x$ in the place of $\tau_x$ and with the additional constraint $\boldsymbol{\gamma} \in \mathcal{K}_{|\mathcal{Y}|}$. Temporarily ignoring this constraint leads to (see \eqref{approximategamma}), 
\begin{equation}
    {\displaystyle \hat{\gamma}_y = \frac{1}{N}\sum\limits_{x\in \mathcal{X}} N_{x, \hat\tau_x (y)}}, \;\; \; \mbox{for $y \in \mathcal{Y}$}.
\label{approximategamma2}
\end{equation}
The fact that $N_{x, \hat{\tau}_x (1)} \geq N_{x, \hat{\tau}_x (2)} \geq \cdots \geq N_{x, \hat{\tau}_x (|\mathcal{Y}|)}$, for any $x\in\mathcal{X}$, implies that $\hat{\boldsymbol{\gamma}} \in \mathcal{K}_{|\mathcal{Y}|}$, without having to include this constraint.  Consequently, problem \eqref{thirdscenario}--\eqref{thirdscenario_b}  has a global solution given by \eqref{approximategamma2} and \eqref{sortedNx}.
\hfill $\blacksquare$

\section{Detailed Description of the Datasets Used in Section \ref{sec:real_data}}
\label{app_data}
\textbf{Adult} - This dataset consists of 48832 records from the census database of the US in 1994. We consider the following pairs: \textit{(occupation, income)} and \textit{(work class,income)}. The variable \textit{occupation} takes values in the set $\{$\textit{admin, armed-force, blue-collar, white-collar, service, sales, professional}, \textit{other-occupation}$\}$. The variable \textit{work class} takes categories in $\{$\textit{private, self-employed, public servant, unemployed}$\}$. Finally, \textit{income} is a binary variable taking value in $\{ >50,\, \leq 50\}$. Following \citet{paper_stochastic}, we assume that the ground truth is \textit{occupation} $\rightarrow$ \textit{income} and \textit{work class} $\rightarrow$ \textit{income}.

\textbf{Pittsburgh Bridges} - This dataset contains records about 108 bridges and some of their characteristics. We consider the  pairs (\textit{purpose}, \textit{type}) and (\textit{material},\textit{lanes}). The variable \textit{purpose} takes values in $\{$\textit{Walk, Aqueduct, RR, Highway}$\}$; variable \textit{type} takes values in $\{$\textit{Wood, Suspen, Simple-T, Arch, Cantilev, CONT-T}$\}$; variable \textit{material} takes values in $\{$\textit{Steel, Iron, Wood}$\}$; variable \textit{lanes} takes values in $ \{1,2,4,6\}$. Following \citet{hcr}, the ground truths assumed are \textit{material} $\rightarrow$ \textit{lanes} and \textit{purpose} $\rightarrow$ \textit{type}. 

\textbf{Acute Inflammations} - This dataset consists of 120 patients and whether each patient is experiencing a specific symptom, the temperature, and whether he/she suffers from acute inflammations of the urinary bladder and/or acute nephritis. We consider the binary variables \textit{ocurrence of nausea} $(Y_1)$, \textit{lumbar pain} $(Y_2)$, and \textit{burning of urethra} $(Y_3)$ (naturally, taking value 1 if the patient has that symptom, and 0 otherwise). Moreover, $X$ represent the diagnosis \textit{inflammation of urinary bladder}. Following \citet{anm2011}, the goal is to model the diagnosis process, thus we expect $Y_j \rightarrow X$, for $j = 1,...,3$. Notice that the variables $X_i$ only correspond to the diagnosis, not necessarily the truth, otherwise they would be considered the cause, rather than the effect.  

The $\chi^2$ test did not reject the null hypothesis of independence (at a significance level of 5\%) for the following pairs of variables in this dataset: (\textit{Inflammation of urinary bladder}, \textit{Occurrence of nausea}); (\textit{Inflammation of urinary bladder}, \textit{Burning of urethra}); (\textit{Nephritis}, \textit{Micturition pains}). Although, intuitively, we would expected a causal relation between those, the statistical evidence is not strong enough in favour of their mutual dependency (according to the $\chi^2$ test), therefore, these pairs were discarded from the experiments. 

\textbf{Temperature} - This dataset consists of 9162 daily values of temperature measured in Furtwangen (Germany), with the variable \textit{day of the year} taking integer values from 1 to 365 (or 366 for leap years) and \textit{temperature} in $^{\circ} C$. Here, we aggregate days associated with each month and take \textit{month} $\rightarrow$ \textit{temperature} as the ground truth, as \citet{anm2011}. Notice that \textit{month} assumes a cyclic structure.

\textbf{Horse Colic} - This dataset contains 368 medical records of horses. We study the causal relationship between the variable \textit{abdomen status}, which takes the values in $\{$\textit{Normal, Other, Firm feces in the large intestine, Distended small intestine, Distended large intestine}$\}$, and the binary variable \textit{surgical lesion}, indicating whether the lesion was surgical or not. As \citet{horsedata}, we regard \textit{abdomen status} $\rightarrow$ \textit{surgical lesion} as ground truth.




\end{document}